\newtheorem{theorem}{Theorem}
\newtheorem{proposition}[theorem]{Proposition}
\newtheorem{corollary}[theorem]{Corollary}
\newtheorem{definition}[theorem]{Definition}
\newcommand{\Pa}{\text{Pa}}
\newcounter{int}
\newcommand{\citen}[1] {\setcounter{int}{0}\@for\tmp:=#1\do{%
\ifnum \value{int}>0; \fi%
\setcounter{int}{1}%
\citeauthor{\tmp} \shortcite{\tmp}}}
\newcommand{\citenp}[1]{\setcounter{int}{0}\@for\tmp:=#1\do{%
\ifnum \value{int}>0; \fi%
\setcounter{int}{1}%
\citeauthor{\tmp}, \citeyear{\tmp}}}
\begin{document}
%
\title{Deception through Half-Truths}
\author{Andrew Estornell, Sanmay Das, Yevgeniy Vorobeychik\\Computer Science \& Engineering, Washington University in St.~Louis\\\{aestornell,sanmay,yvorobeychik\}@wustl.edu}
\maketitle

\begin{abstract}
Deception is a fundamental issue across a diverse array of settings, from cybersecurity, where decoys (e.g., honeypots) are an important tool, to politics that can feature politically motivated ``leaks'' and fake news about candidates.
Typical considerations of deception view it as providing false information.
However, just as important but less frequently studied is a more tacit form where information is strategically hidden or leaked.
We consider the problem of how much an adversary can affect a principal's decision by ``half-truths'',
that is, by masking or hiding bits of information, when the principal is oblivious to the presence of the adversary. The principal's problem can be modeled as one of predicting future states of variables in a dynamic Bayes network, and we show that, while theoretically the principal's decisions can be made arbitrarily bad, the optimal attack is NP-hard to approximate, even under strong assumptions favoring the attacker. 
However, we also describe an important special case where the dependency of future states on past states is additive, in which we can efficiently compute an approximately optimal attack. Moreover, in networks with a linear transition function we can solve the problem optimally in polynomial time. 
\end{abstract}

\section{Introduction}
For better or for worse, deception is ubiquitous.
It can be benign, but just as often deception is used to deliberately mislead.
Commonly, the means of deception can be viewed as outright lies or misinformation.
This is certainly the case with fake news and false advertising, as well as phishing emails, and it is also the case for honeypots, even though here deception is used to help network security, rather than for a nefarious purpose.
However, a more subtle means of deception involves strategically hiding information.
For example, misleading advertising about a drug may omit important information about its side-effects, and we may effectively protect a system against classes of attacks by strategically deciding what is public about it, such as a Windows computer publicizing a Safari browser, but not the OS, to make it appear it's running Mac OS X.

Theoretical studies of deception typically leverage games of incomplete information, where deception takes the form of signaling misinformation about private state~\cite{Carrol11,Pawlick15}, for example, advertising an incorrect configuration of computing devices (e.g., a Windows machine advertising as Linux)~\cite{Schlenker18}, or warning that there may be inspections when no inspectors are present~\cite{Xu16}.
We take a 
different perspective.
Specifically, we start with a decision-maker (the \emph{principal}) who makes decisions under uncertainty based on limited evidence.
To formalize this setting, we consider a two-stage dynamic Bayes network in which the principal observes a partial realization of the first stage, and makes a prediction (i.e., derives a posterior) about the second stage.
We study the extent to which such a decision-maker is susceptible to deception through \emph{half-truths}---that is, through an adversarial masking of a subset of first-stage variables, with the assumption that the principal is oblivious to the adversarial nature of this masking (for example, the individual is unaware, or fails to take into account, that it is performed adversarially).

While it may at first blush be puzzling how a rational Bayesian observer would be oblivious to the presence of an adversary, situations of this kind in fact abound.
Consider algorithmic trading as one example.
When order book information became available, it gave rise to numerous sophisticated machine learning methods aiming at taking advantage of this additional information~\cite{Nevmyvaka06,Nevmyvaka13}.
However, many such approaches proved to be vulnerable to order book spoofing attacks~\cite{Wang18}.
Another example is autonomous driving.
Despite a number of illustrations of attacks on state-of-the-art sophisticated AI-based perception algorithms~\cite{Boloor19,Eykholt18,Sharif16,Vorobeychik18book}, standard autonomous driving stacks, such as Autoware~\cite{Autoware} and Apollo~\cite{Apollo} are largely devoid of any techniques for robust perception.

Our first observation is that in our setting half-truths (that is, adversarial masking of observations) can lead to arbitrarily wrong beliefs.
This is self-evident with lies, but surprising when we can only mask observations.
However, we show that the problem of optimally choosing such a mask is extremely hard: in general, it is inapproximable to any polynomial factor.
Next, we study an important restricted family of Bayes networks in which transition probabilities of nodes depend on the sum of the parents.
This is a natural model if we consider, for example, opinion diffusion through social influence.
For example, suppose that each variable represents whether an individual likes a particular candidate in an election.
The opinions in the second stage would correspond to the impact of social influence, where parents of a node are their social network neighbors.
Our model means that a node's view depends on the number of their neighbors who like the candidate.
In this \emph{additive} model, we show that the problem does not admit a PTAS even when nodes have at most two parents.
However, we exhibit two algorithmic approaches for solving this variant:
the first an $n$-approximation algorithm, the second a heuristic (which admits no performance guarantees).
Our experiments show that the combination of the two yields good performance in practice, even while each is limited by itself.
Finally, we show that when temporal dependency is linear, we can find an optimal mask in polynomial time.

\noindent{\bf Related Work } A number of prior efforts study deception, many in the context of cybersecurity.
Among the earliest is work by~\citeauthor{Cohen03}~\shortcite{Cohen03},
who formalize deception as guiding attackers
through (a benign part of) the attack graph.
Recent \emph{qualitative} studies of deception~\cite{Almeshekah16,Stech16} offer additional insights, but do not provide mathematical modeling approaches.
A series of mathematical formalizations of deception in
cyber security have also been proposed~\cite{Carrol11,Greenberg82,Ettinger10,Pawlick15,Xu16}, but these tend to model \emph{static} scenarios and misinformation, rather than information hiding.
Several other mathematical models address allocation of honeypots, which is a common means for deceiving cyber attackers~\cite{Kiekintveld15}.
Recently, deception has also been considered as a security game in which a defender chooses a deceptive presentation of system configuration to an attacker~\cite{Schlenker18}, but without considering half-truths or structured information representation such as a DBN.

Another relevant stream of research is that on \emph{information design} \cite[e.g.]{Rayo2010Sender}. In the commonly studied Bayesian persuasion model \cite{kamenica2011bayesian}, one considers a signaling game between a sender and a receiver, where the sender has the ability to acquire superior information to the receiver, and the receiver makes a decision that yields (state-dependent) utilities for both. 
The key question concerns the design of the optimal signal structure.
This area has recently received attention from both the algorithmic perspective (how hard is the sender's problem under different assumptions \cite{dughmi2016algorithmic}) and in various applications, for example pricing \cite{Shen2018Closed}, auction design \cite{li2019signal}, and security games \cite{Rabinovich2015information}. 
Our work is distinct in that it assumes an oblivious principal, but effectively considers signals which have combinatorial structure.
\section{Preliminaries} 

Consider a collection of binary variables $\mathbf{X} = \{X_1, ..., X_n\}$.
We define a 2-stage dynamic Bayes network over these, using superscripts to indicate time steps (0 and 1).
Specifically, we assume that each $X_i^0$ is unconditionally independent and for each $X_i^0$, let $\mathbb{P}(X_i^0=1) = p_i$.
Moreover, each $X_i^1$ has a set of parent nodes, $\Pa(X_i^1)\subset\mathbf{X}^0$ (we only allow inter-stage dependencies to simplify discussion), and for each $X_i^1$, define $\mathbb{P}(X_i^1=1|Pa(X_i^1))$ as the probabilistic relationship of the associated variable with its parents (variables it depends on) from stage 0.
We will denote the realized values of these random variables in lower case: that is, the realization of a random variable $X_i^t$ is $x_i^t$.


We use this structure to define an interaction between an attacker and a myopic observer (who we also call the \emph{principal}).
In particular, consider an observer who observes a partial realization of stage-0 variables, and aims to predict (in a probabilistic sense) the values of variables in stage 1.
This high-level problem is a stylized version of a broad range of decision problems, such as voting behavior.
Examples include observing candidate promises, personalities, and past voting record, to predict what they would do once elected; observing infection status for a collection of individuals on a social network, and aiming to predict who will be infected in the future; and so on.
We assume that the observer is myopic in the sense that they use standard Bayesian reasoning about posterior probabilities conditional on their observations of stage-0 realizations.
However, we specifically study a situation in which a malicious party adversarially masks a subset of stage-0 realizations (having first observed them).
We denote the masked posterior by $\mathbb{P}(X_i^1 = 1| \Pa(X_i^1)\setminus \eta)$, where $\eta$ is a binary vector with $\eta_i = 1$ whenever the realization of $X_i^0$ is not observed (because it is masked).
We assume that all the stage-0 realizations that are not masked are observed by the principal.
Let $\bold{X}^1$ denote the random vector distributed according to $\mathbb{P}(X_i^1 = 1| \Pa(X_i^1)$ (the full set of its parents from $\bold{X}^0$), while $\bold{X}^1_\eta$ is a random vector distributed according to $\mathbb{P}(X_i^1 = 1| \Pa(X_i^1)\setminus \eta)$.
More precisely, the sequence of the interaction is as follows:
\begin{enumerate}
    \item Nature generates a vector $\bold{x}^0 = \langle x_1^0, ..., x_n^0 \rangle$ defining the outcomes of $\bold{X}^0$ according to its prior distribution $p$.
    \item The attacker observes $\bold{x}^0$ and may choose up to $k$ outcomes to hide from the observer.  This decision is captured by the mask $\eta$.
    \item The observer observes the partially realized state of $\bold{X}^0$ after applying the mask $\eta$, and makes a prediction about $\bold{X}^1$ (which we capture by the distribution of $\bold{X}^1_\eta$).
    \item Nature then yields the realization of $\bold{x}^1 = \langle x_1^1, ..., x_1^1 \rangle$ according to the posterior distribution of $\bold{X}^1$.
\end{enumerate}

To understand the consequence of adversarial ``half-truths'' of this kind, we consider two problems faced by the adversary: targeted and untargeted attacks.
Specifically, let the two random vectors, $\bold{X}^1$ and $\bold{X}^1_\eta$ also stand for their respective distributions, and let $D(\bold{X}^1, \bold{X}^1_\eta)$ be a statistical distance between the two distributions according to some metric.
In the untargeted case, the adversary's problem is to maximize the distance between the masked and true posterior distributions over the random vector in stage 1:
\begin{equation}
    \label{E:advproblem}
    \max_\eta D(\bold{X}^1, \bold{X}^1_\eta) \quad \mathrm{s.t.:} \quad \sum_i \eta_i \le k.
\end{equation}
In the targeted case, the adversary has some desired distribution, $\mathbf{X}^1_{\alpha}$, and the adversary would like to push the observer's perception as close to this distribution as possible.
We formalize this as
\begin{equation}
    \label{E:targetproblem}
    \min_{\eta}D(\mathbf{X}_{\alpha}^1, \mathbf{X}_{\eta}^1) \quad \mathrm{s.t. :} \quad \sum_i \eta_i \le k.
\end{equation}
Note that in this notation we are suppressing the dependence on the prior, which is implicitly part of any problem instance faced by the adversary.

\section{Half-Truth is as Good as a Lie}

Our first result demonstrates that in a fundamental sense, in our model, there are cases where partially hiding the true current state can lead to arbitrary distortion of belief by a myopic observer.

Recall that the adversary's aim is to maximize statistical distance $D$ between the true posterior distribution over $\bold{X}^1$, and the posterior induced by masking a subset of variables in stage 0, $\bold{X}^1_\eta$.
We now show that for most reasonable measures of statistical distance, we can construct cases in which the adversary can make it arbitrarily large (within limits of the measure itself)---that is, the adversary can induce essentially arbitrary distortion in belief solely by masking some of the observations.



\begin{definition}\label{def:possem}
We say a statistical distance is positive if for any two random variables $A$, $B$ we have $D(A, B) \geq 0$.
\end{definition}
Note that any distance metric, or probabilistic extension of a distance metric, fits the definition of positive symmetric. 

\begin{theorem}\label{prop:arb}
Suppose the attacker's objective is to maximize some positive statistical distance $D$. Let $\bold{A}$ and $\bold{B}$ be any vectors of binary random variables, then there exists some sequence of dynamic Bayes networks such that 
$$\lim_{n\rightarrow\infty} \big(\mathbb{E}_{\bold{X}^0}\big[\max_{\eta} D(\bold{X}^1, \bold{X}^1_{\eta})\big]\big) = \lim_{n\rightarrow\infty}\big(\max_{\bold{A}, \bold{B}}D(\bold{A}, \bold{B})\big)$$
\end{theorem}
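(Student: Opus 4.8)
The plan is to establish the two inequalities separately, the nontrivial direction being a construction. The bound $\mathbb{E}_{\mathbf{X}^0}[\max_\eta D(\mathbf{X}^1,\mathbf{X}^1_\eta)]\le \max_{\mathbf{A},\mathbf{B}}D(\mathbf{A},\mathbf{B})$ holds for every network and every $n$: for each fixed realization $\mathbf{x}^0$ and mask $\eta$, the pair $(\mathbf{X}^1,\mathbf{X}^1_\eta)$ is one particular pair of length-$n$ binary random vectors, so its distance is at most the supremum over all such pairs, and this survives taking the maximum over $\eta$ and the expectation over $\mathbf{X}^0$. Hence it suffices to build, for each $n$, a network whose expected optimal distortion comes within $o(1)$ of the diameter on the right.

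For the construction I would use a gadget in which masking is maximally effective: let each stage-1 node have a single stage-0 parent $X_i^0$ and set $\mathbb{P}(X_i^1=1\mid X_i^0=1)=1$, $\mathbb{P}(X_i^1=1\mid X_i^0=0)=0$, so that $X_i^1=X_i^0$. For any realization $\mathbf{x}^0$ the true posterior $\mathbf{X}^1$ is then the point mass at $\mathbf{x}^0$, while masking all $n$ variables (feasible once $k\ge n$) yields as the masked posterior $\mathbf{X}^1_\eta$ exactly the product prior $\prod_i\mathrm{Bernoulli}(p_i)$. The two features that make this work are that the masked posterior does not depend on the realization, and that (taking, say, all $p_i=\tfrac12$, so that the masked posterior is uniform on $\{0,1\}^n$) the prior is invariant under coordinate relabelings. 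By the relabeling invariance shared by the standard distances, $D(\delta_{\mathbf{x}^0},\mathrm{Unif})$ takes the same value for every $\mathbf{x}^0$, so the expectation over $\mathbf{X}^0$ equals this common value and no averaging is lost.

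It then remains to show that this value approaches the diameter as $n\to\infty$. Here I would argue that the maximizing pair $(\mathbf{A},\mathbf{B})$ can be taken with one argument a point mass: since $D(\cdot,\mathbf{B})$ is convex on the simplex of distributions over $\{0,1\}^n$ (for a metric or the probabilistic extension of a metric), its maximum is attained at an extreme point, i.e.\ a point mass. One then checks that the matching spread distribution is realizable as a masked posterior and that $D(\text{point mass},\mathbf{B})$ indeed tends to the diameter for the distance at hand --- for instance $1-2^{-n}\to1$ for total variation and $n\log2\to\infty$ for KL divergence against the uniform. Combining the two inequalities squeezes the constructed sequence's limit to the diameter.

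The main obstacle is that masking can only move the observer's belief toward the prior average: because $\mathbf{X}^1$ is a point mass pinned at the random realization and a masked posterior can never be a sharp point mass at a different location (if the observations determine $X_i^1$ at all, they determine it to its true value), one cannot aim the true posterior at one fixed target and the masked posterior at a different fixed target for a typical realization. The construction circumvents this by routing all of the distance through the point-mass-versus-spread gap rather than through an arbitrary pair, and the delicate point --- precisely the step where structure beyond mere positivity of $D$ is used --- is verifying that this restricted gap still exhausts the diameter for the distance under consideration.
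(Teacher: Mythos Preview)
Your construction is genuinely different from the paper's and, for the particular distances you name (TV, KL), it does go through; but it does not prove the theorem as stated, and you yourself flag the failure point. The paper's construction is designed precisely to avoid the step you call delicate. Instead of an identity network, the paper makes every stage-1 node depend on \emph{all} of $\mathbf{X}^0$ through an OR: $\mathbb{P}(X_i^1=1\mid \exists\, x_j^0=1)=b_i$ and $\mathbb{P}(X_i^1=1\mid \forall\, x_j^0=0)=a_i$, with priors $\mathbb{P}(X_i^0=1)=\epsilon=\log(n)/n$ and budget $k=n$. Thus the true posterior is exactly $\mathbf{B}$ or exactly $\mathbf{A}$ according to whether any stage-0 node fired, and by masking the (few, in expectation) nodes that fired --- or, conversely, masking some nodes when none fired --- the adversary makes the observer compute the \emph{other} one of $\mathbf{A},\mathbf{B}$ with probability tending to $1$. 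The point is that this realizes the maximizing pair $(\mathbf{A},\mathbf{B})$ directly, for \emph{arbitrary} $\mathbf{A},\mathbf{B}$, so no structure on $D$ beyond positivity is needed: the lower bound $\mathbb{E}[\max_\eta D]\ge (1-o(1))\,D(\mathbf{B},\mathbf{A})$ comes just from dropping nonnegative terms.

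Your identity gadget, by contrast, can only produce pairs of the form $(\delta_{\mathbf{x}^0},\,\text{prior})$, and getting from there to the diameter requires the convexity argument you sketch. That argument is not available under the theorem's sole hypothesis $D\ge 0$. Even granting convexity in the first argument, you only force $\mathbf{A}$ to a point mass; matching the \emph{other} optimal argument $\mathbf{B}$ means setting the prior equal to $\mathbf{B}$, and then your symmetry trick (used to collapse the expectation over the random $\mathbf{x}^0$) breaks unless $\mathbf{B}$ happens to be coordinate- and bit-flip-symmetric. The paper's OR gate sidesteps all of this by decoupling the target distributions $\mathbf{A},\mathbf{B}$ from the stage-0 prior, which is used only to control the probability of the event $\{\exists\, x_j^0=1\}$.
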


\begin{proof}
Let $\bold{A}, \bold{B}$ be the vectors of binary random variables for which $D(\bold{A}, \bold{B})$ attains its maximum value, with respect to $n$. Then $\bold{A} = \langle A_1, ..., A_n \rangle$, $\bold{B} = \langle B_1, ..., B_n \rangle$ and each variable has prior $\mathbb{P}(A_i = 1 ) = a_i$, $\mathbb{P}(B_i = 1) = b_i$.
Let $\boldsymbol{X}^0 \rightarrow \boldsymbol{X}^1$ define a dynamic Bayes network on $n$ variables. For all $1\leq j \leq n$, let $\Pa(X_j^1) = \{X_i^0: 1\leq i \leq n\}$. 
That is, all nodes in layer 0 are parents of every node in layer 1. 
Define the probability distributions over $\boldsymbol{X}^0$ and $\boldsymbol{X}^1$ by the following: $\forall X_i^0\in \boldsymbol{X}^0$, $\mathbb{P}(X_i^0 = 1) = \epsilon$. Next, $\forall X_i^1\in\boldsymbol{X}^1$, $\mathbb{P}(X_i^1 = 1| \exists x_j^0 = 1) = b_i$ and $\mathbb{P}(X_i^1 = 1| \nexists x_j^0 = 1) = a_i$. 


For each $n$ we will consider the value of $D(\bold{X}^1, \bold{X}_{\eta}^1)$ under three types of events that could occur with respect to the possible outcomes, $\bold{x}^0$ of $\boldsymbol{X}^0$, the adversary's budget $k$, and the adversary's choice of which nodes to hide conditional on $\bold{x}^0$. Each of these settings admits a unique type of optimal play from the adversary. Specifically
\begin{itemize}
    \item (1) $\sum_{X_j^0}x_j^0 = 0$. In this case the adversary will hide $k$ random nodes since all outcomes are 0.
    \item (2) $\sum_{X_j^0}x_j^0 = m \leq k$. In this case the adversary will hide only the $m$ nodes whose outcomes are 1.
    \item (3) $\sum_{X_j^0}x_j^0 = m > k$. In this case the adversary will hide nothing.
\end{itemize}
In events of type (1) when there is no mask $\bold{X}^1 = \bold{A}$. When a mask $\eta$ is employed,
$\bold{X}_{\eta}^1 = \bold{B}$ with probability $1-(1-\epsilon)^k$, and $\bold{X}_{\eta}^1 = \bold{A}$ with probability $(1-\epsilon)^k$.
Thus, in this setting, 
\begin{align*}
\mathbb{E}\big[ D(\bold{X}^1, \bold{X}_{\eta}^1 )\big] =&\big(1 - (1-\epsilon)^k\big)D(\bold{A}, \bold{B}) \\
&+ (1-\epsilon)^k D(\bold{A}, \bold{A})
\end{align*}
Events of this type occur with probability $(1-\epsilon)^n$.

In events of type (2), without $\eta$ we have $\bold{X}^1 =  \bold{B}$. Since $m \leq k$ and all nodes with outcome 0 are hidden. 
Thus, in light of $\eta$ we have $\bold{X}_{\eta}^1 = \bold{A}$ with probability $(1 - \epsilon)^m$, and $\bold{X}_{\eta}^1 = \bold{B}$ with probability $1 - (1-\epsilon)^m$.
Therefore, the expected value in this setting is  
\begin{align*}\mathbb{E}\big[ D(\bold{X}^1, \bold{X}_{\eta}^1)\big] = &\big(1 - (1-\epsilon)^m\big)D(\bold{B}, \bold{B}) \\
&+ (1-\epsilon)^m D(\bold{B}, \bold{A})
\end{align*}
Events of this type occur with probability $\binom{n}{m}\epsilon^m(1-\epsilon)^{n-m}$ for each $m \leq k$.

In events of type (3) there are more nodes yielding 1 in layer 0 than the adversary is capable of hiding. So $\bold{X}^1 = \bold{X}_{\eta}^1 = \bold{A}$. Events of this type occur with probability $\binom{n}{m}\epsilon^m(1-\epsilon)^{n-m}$ for each $m > h$.

For notational convenience, and without loss of generality, we will reorder the nodes in $\boldsymbol{X}_n^0$ after the observations are made by the adversary, such that for $0\leq j \leq m$, $x_j^0 = 1$.
Suppose $k = n$, similar analysis holds for any constant fraction of $n$. Since $D$ is positive symmetric we have,
\begin{align*}
 & \mathbb{E}_{\bold{X}^0}\big[\max_{\eta} D(\bold{X}^1, \bold{X}^1_{\eta})\big] \\
 \geq & D(\bold{A}, \bold{B})\big(1-(1-\epsilon)^n\big)(1-\epsilon)^n \\
\qquad &+ D(\bold{B}, \bold{A})\bigg(\sum_{m=1}^n\binom{n}{m}\epsilon^m(1-\epsilon)^{n-m}(1-\epsilon)^m\bigg)
\end{align*}


Using the binomial identities we can reduce the above equation to form
\begin{align*}
&D(\mathbf{A}, \mathbf{B})\big(1 - (1-\epsilon)^n\big)(1-\epsilon)^n \\
+ &D(\mathbf{B}, \mathbf{A})(1-\epsilon )^n \left((\epsilon +1)^n-1\right)
\end{align*}
Thus, since both terms in the above sum are positive, it remains only to be shown  for $\epsilon = \frac{\log(n)}{n}$,
\begin{align*}
(1-\epsilon )^n \left((\epsilon +1)^n-1\right) \rightarrow 1 \text{ as } n\rightarrow \infty
\end{align*}
This limit can be evaluated as follows
\begin{align*}
 = &\lim_{n \rightarrow \infty}\left(1-\frac{\log (n)}{n}\right)^n \left(\left(1 +\frac{\log (n)}{n}\right)^n-1\right)
\end{align*}
Using a slight variation to the identity $\lim_{n\rightarrow \infty}(1+\frac{a}{n})^{c n} = e^{ac}$, we can obtain that this limit does in-fact converge to 1.
Thus giving the desired result that 
\[
\lim_{n\rightarrow\infty} \big(\mathbb{E}_{\bold{X}^0}\big[\max_{\eta} D(\bold{X}^1, \bold{X}^1_{\eta})\big]\big) = \lim_{n\rightarrow\infty}\big(\max_{\bold{A}, \bold{B}}D(\bold{A}, \bold{B})\big)
\]

\end{proof}



\section{Computational Complexity of Deception by Half-Truth}

Let $\bold{X^0} \rightarrow \bold{X^1}$ define a dynamic Bayes network over a set of $n$ binary random variables. Let $\bold{x}^0$ be a binary vector describing the realized outcomes of $\bold{X}^0$. 

In the remainder of the paper, we restrict attention to particular distance metrics of the form: 
\begin{align*}
\text{untargeted: }& D(\bold{X}^1,\bold{X}^1_\eta) = \mathbb{E}\big[||\bold{X}^1 - \bold{X}^1_{\eta}||_p\big] 
\\
\text{targeted: }& D(\bold{X}^1,\bold{X}^1_\eta) = \mathbb{E}\big[||\bold{X}_{\alpha}^1 - \bold{X}^1_{\eta}||_p\big]
\end{align*}
where the expectation is with respect to the product distribution of the two random variables and $p\in \mathbb{N} \cup \{\infty\}$.
These are natural distances in the context of random variables, and correspond to the
Lukaszyk-Karmowski metric (LKM) of statistical distance between the distributions.
We call the resulting problems (of computing the optimal mask given a prior and a realization of variables at layer 0) \emph{Deception by Bayes Network Masking (DBNM)} for the untargeted case, and \emph{Targeted Deception by Bayes Network Masking (TDBNM)} for the targeted case.
We now show that this problem does not even admit a polynomial factor approximation for any $p$.



\begin{theorem} \label{P=NP}
 If DBNM has a deterministic, polynomial-time, polynomial approximation, for any value of $p$, then P=NP.
\end{theorem}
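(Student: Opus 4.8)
The plan is to prove inapproximability by a gap-producing reduction from $3$-SAT in which the optimal attack value is either exactly $0$ or strictly positive, so that any algorithm with a finite (in particular polynomial) approximation guarantee would have to decide satisfiability. The key enabling observation is that for the LKM objective $D(\mathbf{X}^1,\mathbf{X}^1_\eta)=\mathbb{E}[\|\mathbf{X}^1-\mathbf{X}^1_\eta\|_p]$ we have $D=0$ precisely when $\mathbf{X}^1$ and $\mathbf{X}^1_\eta$ are the \emph{same} point mass; this characterization, together with the fact that a nonzero vector has strictly positive $\ell_p$ norm for every $p$, is independent of $p$, so a single reduction handles all $p$ at once.

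Given a formula $\phi$ over variables $v_1,\dots,v_r$, I would build a network with stage-$0$ nodes $X_0^0,X_1^0,\dots,X_r^0$, each with prior $\tfrac12$, and a single ``relevant'' stage-$1$ node $Y$ whose parents are all of them, with the succinctly specified conditional probability $\mathbb{P}(Y=1\mid x_0,x_1,\dots,x_r)=x_0\cdot\mathbf{1}[(x_1,\dots,x_r)\text{ satisfies }\phi]$. Every remaining stage-$1$ node is given a constant conditional distribution, so it never changes and contributes $0$ to $D$. I set the realization to $x_0^0=0$, which makes the true posterior of $Y$ the point mass at $0$, and I set the budget to $k=r+1$. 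The auxiliary ``switch'' variable $X_0^0$ is present only to force the true posterior to $0$ even when $\phi$ happens to be a tautology.

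Next I would verify the gap. If $\phi$ is unsatisfiable the indicator is identically $0$, hence $\mathbb{P}(Y=1\mid\cdot)\equiv 0$; then every mask leaves the posterior of $Y$ and of all other nodes unchanged, $\mathbf{X}^1$ and $\mathbf{X}^1_\eta$ coincide as point masses, and $D=0$ for \emph{all} feasible $\eta$, so $\mathrm{OPT}=0$. If $\phi$ is satisfiable, masking all parents of $Y$ yields masked posterior $\#\{\text{satisfying assignments}\}/2^{r+1}>0$ while the true posterior is $0$, so the single changed coordinate contributes a strictly positive amount to $D$ for every $p$, giving $\mathrm{OPT}>0$. Thus $\mathrm{OPT}>0$ if and only if $\phi$ is satisfiable, and whenever $\mathrm{OPT}=0$ every feasible mask attains value exactly $0$.

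Finally, suppose DBNM admitted a deterministic polynomial-time $\rho$-approximation. Running it on this instance returns a mask of value $\ge\mathrm{OPT}/\rho$: in the unsatisfiable case this equals $0$, while in the satisfiable case it is $\ge\mathrm{OPT}/\rho>0$, so reading off whether the returned value is positive decides $3$-SAT in polynomial time, forcing $\mathrm{P}=\mathrm{NP}$, uniformly in $p$. The hard part will be making the reduction genuinely polynomial and the gap exactly $0$-vs-positive simultaneously: this rests on representing the high-arity conditional distribution of $Y$ succinctly (as the poly-time-computable indicator of $\phi$, matching the generality of the model), and on justifying rigorously, via the point-mass structure together with $\|\mathbf{v}\|_p=0\iff\mathbf{v}=0$, that in the unsatisfiable case no mask can perturb any posterior, so that $\mathrm{OPT}$ is identically zero there.
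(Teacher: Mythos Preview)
Your zero-versus-positive gap is an appealing and genuinely different route from the paper, but the final step hides a circularity. A $\rho$-approximation for DBNM outputs a \emph{mask} $\eta$ with the guarantee $V(\eta)\ge\mathrm{OPT}/\rho$; it need not output $V(\eta)$, and in your construction $V(\eta)=\mathbb{P}(Y_\eta=1)$ is not polynomially computable from $\eta$. Deciding whether $V(\eta)>0$ is precisely the question ``does $\phi$ have a satisfying assignment in which every unmasked variable is fixed to its realized value?'' --- itself a SAT instance. So ``reading off whether the returned value is positive'' is not a polynomial-time step, and the reduction does not close.

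The paper's construction avoids this by arranging that the returned mask \emph{is} a directly checkable certificate. It uses priors $1/2^{2n}$ (not $1/2$) and the all-ones realization, so that hiding $X_i^0$ overwhelmingly flips it to $0$; a mask $\eta$ therefore encodes the single assignment ``$b_i=0$ if $X_i^0\in\eta$, else $b_i=1$.'' The paper then proves a quantitative gap: masks whose encoded assignment satisfies $\Phi$ have value at least $(1-1/2^{2n})^n$, while all other masks have value at most $1/2^n$. Hence, in the satisfiable case, any polynomial-factor approximation must return a mask whose encoded assignment satisfies $\Phi$, and one simply evaluates $\Phi$ on that one assignment --- no posterior computation is required. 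Your scheme can be repaired along the same lines (replace the uniform priors by extreme ones so that masking effectively flips a bit and the mask encodes a checkable assignment), but once you do so the argument essentially coincides with the paper's; the switch variable $X_0^0$ then becomes superfluous, since the degenerate ``all-ones already satisfies $\Phi$'' case is handled, as in the paper, by a single pre-check.
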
 

\begin{proof} Suppose that there exists a deterministic, polynomial factor, polynomial time approximation of DBNM. 
We will show that under this assumption 
SAT can be solved in polynomial time.
Consider an instance of SAT defined by a set of Boolean variables $B$ and a Boolean function $\Phi$, whose terms are the elements of $B$. 
The objective is to determine if there exists an assignment of the variables in $B$ such that $\Phi$ evaluates to 1.
An arbitrary instance of SAT can be encoded into DBNM in the following manner.
Let $\bold{X^0} = B$, $\Pa(X_1^1) = \bold{X^0}$, and define $\mathbb{P}(X_1^1 = 1| \Pa(X_1^1)) = \Phi$ (that is, $X_1^1 = 1$ if and only if the formula $\Phi$ evaluates to true). For all other $j \neq 1$, $\mathbb{P}(X_j^1 = 1 | \Pa(X_j^1)) = 0$. Lastly, set each prior $\mathbb{P}(X_i^0 = 1) = \frac{1}{2^{2n}}$ and set $\bold{x}^0 = \langle 1, 1, ..., 1 \rangle$. 

In the case that $b = 1$, $\forall b \in B$,  yields $\Phi = 0$, the objective of the attacker is to select a mask $\eta$ that maximize the value of $\mathbb{P}(X_1^1 = 1 | \bold{x}^0 \setminus \eta)$. For a given mask $\eta$, $\bold{y}_{\eta}^0$ be any outcome that agrees with $\bold{x}^0$ on all in $\bold{X^0} \setminus \eta$, i.e. $x_i = y_{\eta, i}^0$ for all $X_i^0\notin\eta$. 
Let
\[ 
a_{\bold{y}_{\eta}^0} = ||\bold{x}^0 - \bold{y}_{\eta}^0||_1
\]
Then, for any $\eta$ we have,
\begin{align*}
& \mathbb{P}(X_{\eta, 1}^1 = 1 | \bold{x^0}) = \sum_{\bold{y}_{\eta}^0}\mathbb{P}(\bold{y}_{\eta}^0) \mathbb{P}(X_1^1 = 1| \bold{y}_{\eta}^0) \\
= &\sum_{\bold{y}_{\eta}^0}\mathbb{P}(X_1^1 = 1 | \bold{y}_{\eta}^0) (1-\frac{1}{2^{2n}})^{a_{\bold{y}_{\eta}^0}}(\frac{1}{2^{2n}})^{|\eta| - a_{\bold{y}_{\eta}^0}}
\end{align*}

A certificate for the SAT instance can be generated via assigning $b_i = 1$ if $X_i^0\notin \eta$ and $b_i = 0$ if $X_i^0\in \eta$.
To see that this certificate is valid, consider two cases on $\eta$.
The first being, $\eta$ corresponds to an assignment of $B$ yielding $\Phi = 0$, and the second being when the assignment gives $\Phi = 1$.

In the first case, let $\bold{y}_{\eta}^{'0}$ be the $\bold{y}_{\eta}^0$ outcome such that $y_{\eta, i}^0 = 0$ for all $X_i^0\in \eta$ and $y_{\eta, j}^0 = 1$ for all $X_j^0\notin\eta$. 

Then, since $\mathbb{P}(X_1^1 = 1 | \bold{y}_{\eta}^{'0}) = 0$, we have

\begin{align*}
 & \sum_{\bold{y}_{\eta}^0}\mathbb{P}(X_1^1 = 1 | \bold{y}_{\eta}^0) (1-\frac{1}{2^{2n}})^{a_{\bold{y}_{\eta}^0}}(\frac{1}{2^{2n}})^{|\eta| - a_{\bold{y}_{\eta}^0}} \\
 = &\sum_{\bold{y}_{\eta}^0 \neq \bold{y}_{\eta}^{'0} }\mathbb{P}(X_1^1 = 1 | \bold{y}_{\eta}^0) (1-\frac{1}{2^{2n}})^{a_{\bold{y}_{\eta}^0}}(\frac{1}{2^{2n}})^{|\eta| - a_{\bold{y}_{\eta}^0}}
\end{align*}

Note that for each $\bold{y}_{\eta}^0 \neq \bold{y}_{\eta}^{'0}$, $|\eta| - a_{\bold{y}_{\eta}^0}\geq 1$.
Thus, 
\begin{align*}
 = &\sum_{\bold{y}_{\eta}^0 \neq \bold{y}_{\eta}^{'0} }\mathbb{P}(X_1^1 = 1 | \bold{y}_{\eta}^0) (1-\frac{1}{2^{2n}})^{a_{\bold{y}_{\eta}^0}}(\frac{1}{2^{2n}})^{|\eta| - a_{\bold{y}_{\eta}^0}} \\
 \leq & \sum_{\bold{y}_{\eta}^0 \neq \bold{y}_{\eta}^{'0}}\frac{1}{2^{2n}} \leq 2^n(\frac{1}{2^{2n}})= \frac{1}{2^n}
\end{align*}
Therefore, if the adversary selects a mask that does not correspond to a satisfying assignment for $\Phi$, its utility is at most $\frac{1}{2^n}$.

The next case to consider is when the adversary selects a a mask which induces $\Phi =1$.
In this case, we have 
\begin{align*}
&\sum_{\bold{y}_{\eta}^0  \neq \bold{y}_{\eta}^{'0} }\mathbb{P}(X_1^1 = 1 | \bold{y}_{\eta}^0) (1-\frac{1}{2^{2n}})^{a_{\bold{y}_{\eta}^0}}(\frac{1}{2^{2n}})^{|\eta| - a_{\bold{y}_{\eta}^0}} \\
 &\quad\quad +  \mathbb{P}(X_1^1 = 1 |\bold{y}_{\eta}^{'0})(1 - \frac{2}{2^{2n}})^{a_{\bold{y}_{\eta}^{'0}}} \\
 & \geq  (1 - \frac{1}{2^{2n}})^{a_{\bold{y}_{\eta}^{'0}}} \geq (1 - \frac{1}{2^{2n}})^n 
\end{align*}
Thus, if $\eta$ induces an assignment of $B$ that yields $\Phi = 1$, the adversary utility at least $(1-\frac{1}{2^{2n}})^n$.
Which converges to $1$, from below, faster than an polynomial of $n$.

By these two cases, we know that when $\Phi$ is satisfiable, there exists a mask with value at least $(1 - \frac{1}{2^{2n}})^n$ and that no mask corresponding to $\Phi=0$ can have value greater than $\frac{1}{2^n}$.
In addition to the results of these two cases, we also know that an optimal mask can achieve no more than a value of $1$, since only 1 node in $\bold{X}^1$ has outcomes dependent on $\bold{X}^0$ and any $L_p$ norm applied to a vector with only a single nonzero dimension will evaluate to exactly the value of the dimension. 
Therefore, if a polynomial approximation of the optimal solution were to be given, one could deduce the satisfiability of $\Phi$ based on the value of the mask $\eta$. That is if $V(\eta) \leq \frac{1}{2^n}$, then $\Phi$ is not satisfiable, and if $V(\eta) \geq (1 - \frac{1}{2^{2n}})^n$, then $\Phi$ is satisfiable and $\eta$ gives the satisfying assignment. 

This covers all but the case when $b_i = 1$, $\forall b_i \in B$, yields $\Phi = 1$. In this case, the adversary could return a mask of value arbitrarily close to $0$ even though $\Phi$ has a satisfying assignment. This case is easily remedied by choosing to check the assignment $b_i =1$, $\forall b_i\in B$, before running the approximation.

Under this scheme we could use the polynomial approximation algorithm to determine if a given instance of SAT is satisfiable. Since SAT is NP complete, the existence of such an approximation algorithm would imply that P = NP.
\end{proof}

Next, we show that this inapproximability obtains even if we consider randomized algorithms.
\begin{theorem}\label{RP=NP}
 If DBNM has a randomized polynomial factor approximation with constant probability, for any $p$, then PR = NP.
\end{theorem}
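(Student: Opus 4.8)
The plan is to reuse the reduction from SAT to DBNM built in the proof of Theorem~\ref{P=NP} and to observe that the exponential gap established there is exactly what a one-sided randomized procedure needs in order to place SAT in $\mathrm{RP}$. Recall that in that reduction a satisfying assignment yields a mask of value at least $(1-\tfrac{1}{2^{2n}})^n$, whereas every mask corresponding to a non-satisfying assignment has value at most $\tfrac{1}{2^n}$. Since $\mathrm{RP}\subseteq\mathrm{NP}$ holds unconditionally and $\mathrm{RP}$ is closed under polynomial-time many-one reductions, it suffices to prove $\mathrm{SAT}\in\mathrm{RP}$; this gives $\mathrm{NP}\subseteq\mathrm{RP}$ and hence $\mathrm{RP}=\mathrm{NP}$.

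First I would check that the gap survives a polynomial approximation factor, exactly as in the deterministic case. If the randomized algorithm, when it succeeds, returns a mask of value at least $\mathrm{OPT}/q(n)$ for a fixed polynomial $q$, then on a satisfiable instance the returned value is at least $(1-\tfrac{1}{2^{2n}})^n/q(n)$. Because $(1-\tfrac{1}{2^{2n}})^n\to 1$ while $q(n)$ is polynomial, this quantity exceeds $\tfrac{1}{2^n}$ for all large $n$, and by the gap any mask of value above $\tfrac{1}{2^n}$ must encode a satisfying assignment of $\Phi$.

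The core of the argument is the following candidate $\mathrm{RP}$ algorithm for SAT. Given $\Phi$, first test the all-ones assignment directly and accept if it satisfies $\Phi$ (this disposes of the degenerate case flagged in Theorem~\ref{P=NP}). Otherwise build the DBNM instance, run the randomized approximation to obtain a mask $\eta$, read off the assignment $b_i=1\iff X_i^0\notin\eta$, and accept iff this assignment satisfies $\Phi$. The final step is a direct evaluation of $\Phi$, so I would verify the two $\mathrm{RP}$ conditions as follows. For soundness: if $\Phi$ is unsatisfiable no assignment can satisfy it, so the verification rejects on every run regardless of what mask the approximation produces, and the acceptance probability is zero. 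For completeness: if $\Phi$ is satisfiable then with probability at least the constant success probability $c$ of the approximation the returned mask has value at least $\mathrm{OPT}/q(n)>\tfrac{1}{2^n}$, hence encodes a satisfying assignment, and the algorithm accepts.

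The only nonroutine point is turning the fixed constant $c>0$ into the acceptance threshold $\tfrac12$ required by $\mathrm{RP}$, and I expect this to be the main (though standard) obstacle. It is handled by probability amplification: repeat the procedure $t=\lceil 1/c\rceil$ times with independent coins and accept if any repetition accepts. On a satisfiable instance the acceptance probability rises to at least $1-(1-c)^t\ge\tfrac12$, while soundness is untouched because each individual run already accepts with probability zero on unsatisfiable instances. Everything else---the reduction and the two value bounds---is inherited verbatim from Theorem~\ref{P=NP}, so assembling these pieces yields $\mathrm{SAT}\in\mathrm{RP}$ and therefore $\mathrm{RP}=\mathrm{NP}$.
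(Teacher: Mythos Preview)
Your proposal is correct and follows essentially the same approach as the paper: reuse the SAT-to-DBNM reduction from Theorem~\ref{P=NP}, exploit the exponential gap to get one-sided error, and amplify the constant success probability by independent repetition to place SAT in $\mathrm{RP}$. Your version is more carefully written---in particular, explicitly verifying the extracted assignment against $\Phi$ makes the zero-error soundness immediate---but the underlying argument is the same.
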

\begin{proof} Using the previous construction from SAT to DBNM. If there existed an algorithm that could produce a  polynomial factor approximation of the constructed instance of DBNM with some constant probability $p \in (0, 1)$, then the same line of reasoning in the above proof yields a polynomial time algorithm that can determine if a true instance of SAT is satisfiable with probability at $p\in (0, 1)$. 
This algorithm could then be run $\frac{1}{p}$ times to obtain a success rate of $ 1 - (1 - p)^{\frac{1}{p}} \geq 1 - \frac{1}{e} \geq \frac{1}{2}$.
Moreover, the algorithm would never falsely identify a non-satisfiable instance as satisfiable. 
The existence of such an algorithm would imply that SAT $\in$ RP, and since SAT is NP-complete and RP is closed under L-reductions, this would also imply that RP = NP.
\end{proof}

Finally, we extend the hardness results above to the targeted version of our problem.
\begin{corollary}
If TDBNM has a deterministic polynomial time, polynomial approximation, or a randomized polynomial time, polynomial approxiation with constant probability, for any $p$, then P=NP or RP=NP respectivly.
\end{corollary}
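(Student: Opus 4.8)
The plan is to reuse, essentially verbatim, the SAT reduction built in the proof of Theorem~\ref{P=NP}, and simply equip it with a target distribution chosen so that the targeted objective becomes an order-reversing affine transform of the untargeted one. Recall that in that construction only the single node $X_1^1$ depends on stage~$0$: we have $\mathbb{P}(X_1^1=1\mid\Pa(X_1^1))=\Phi$ and $\mathbb{P}(X_j^1=1\mid\Pa(X_j^1))=0$ for every $j\neq1$. For the targeted instance I would take $\mathbf{X}^1_\alpha$ to be the point mass with $X_{\alpha,1}^1=1$ with probability $1$ and $X_{\alpha,j}^1=0$ for all $j\neq1$.

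With this target the targeted distance collapses to a one-dimensional quantity. Since every coordinate other than the first is deterministically $0$ in both $\mathbf{X}^1_\alpha$ and $\mathbf{X}^1_\eta$, the vector $\mathbf{X}^1_\alpha-\mathbf{X}^1_\eta$ has a single nonzero coordinate, so for every $p\in\mathbb{N}\cup\{\infty\}$ its $L_p$ norm equals $|X_{\alpha,1}^1-X_{\eta,1}^1|$ (the same observation already used in Theorem~\ref{P=NP}). Writing $q_\eta=\mathbb{P}(X_1^1=1\mid\mathbf{x}^0\setminus\eta)$ for the quantity maximized in the untargeted proof, and using $X_{\alpha,1}^1\equiv1$, a one-line computation gives
\begin{equation*}
D(\mathbf{X}^1_\alpha,\mathbf{X}^1_\eta)=\mathbb{E}\big[\,|1-X_{\eta,1}^1|\,\big]=1-q_\eta .
\end{equation*}
Hence minimizing the targeted objective over $\eta$ is exactly equivalent to maximizing $q_\eta$, the untargeted objective of Theorem~\ref{P=NP}. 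The two-case estimates of that proof then transfer directly: if $\Phi$ is satisfiable the optimal targeted value is at most $1-(1-2^{-2n})^n=O(n\,2^{-2n})$, whereas if $\Phi$ is unsatisfiable every mask gives $q_\eta\le 2^{-n}$ and so the optimal targeted value is at least $1-2^{-n}$.

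It then remains to check that a polynomial-factor approximation preserves this gap under minimization. A $\rho$-approximation returns a mask with value $V(\eta)\le\rho\cdot\mathrm{OPT}$, so in the satisfiable case, with $\rho=\mathrm{poly}(n)$, I would bound $V(\eta)\le\mathrm{poly}(n)\cdot O(n\,2^{-2n})\le 2^{-n}$ for all large $n$, while in the unsatisfiable case $V(\eta)\ge\mathrm{OPT}\ge 1-2^{-n}$. These ranges are disjoint, so thresholding $V(\eta)$ at $\tfrac12$ decides satisfiability in polynomial time, giving P$=$NP; the degenerate instance where the all-ones assignment already satisfies $\Phi$ is disposed of exactly as in Theorem~\ref{P=NP}, by testing that assignment before calling the approximator. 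The randomized statement follows by the identical success-probability boosting argument used for Theorem~\ref{RP=NP}, yielding RP$=$NP.

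The main obstacle, and the only point needing care, is confirming that the order-reversal does not spoil the multiplicative guarantee. Because $\mathrm{OPT}$ is \emph{small} precisely in the satisfiable (``yes'') case, one must verify that multiplying an exponentially small optimum by a polynomial factor still leaves the approximate value below the $2^{-n}$ threshold, and dually that the unsatisfiable optimum, being bounded away from $0$, forces the approximate value to remain large. Both follow from $1-(1-2^{-2n})^n\le n\,2^{-2n}$ together with $\mathrm{poly}(n)\cdot n\le 2^n$ for large $n$; the only residual subtlety is the case $\mathrm{OPT}=0$, which can arise only for trivially satisfiable $\Phi$ and still forces $V(\eta)=0\le 2^{-n}$, so it does not affect the decision. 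No combinatorial argument beyond the construction of Theorem~\ref{P=NP} is needed.
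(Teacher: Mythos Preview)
Your proposal is correct and follows essentially the same approach as the paper: you choose the target $\mathbf{X}^1_\alpha=\langle 1,0,\ldots,0\rangle$ and reuse the SAT construction of Theorem~\ref{P=NP}, so that minimizing $D(\mathbf{X}^1_\alpha,\mathbf{X}^1_\eta)=1-q_\eta$ coincides with maximizing $q_\eta$. The paper's proof simply asserts that ``the proof follows identically''; your version is more careful in that you explicitly verify that a \emph{multiplicative} approximation guarantee for the minimization problem still separates the satisfiable and unsatisfiable cases (since $\mathrm{OPT}\le n\,2^{-2n}$ when $\Phi$ is satisfiable, multiplying by any polynomial factor still leaves the value below $1-2^{-n}$), a point the paper glosses over.
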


\begin{proof}
In both cases we can set $\mathbf{X}_{\alpha} = \langle1, 0, ..., 0 \rangle$ and our objective is exactly the same as it was in the untargeted case, with the only difference being that we need not consider the case when $b_i = 0$ for all $i\le n$ yields $\Phi = 1$, since $\eta = \emptyset$ is an optimal mask. Once we have this setting for $\mathbf{X}_{\alpha}$, the proof follows identically to the proofs of \ref{P=NP} and \ref{RP=NP}.
\end{proof}
\section{Approximation Algorithm for the Additive Case}\label{sec:add}
Our result above shows that polynomial approximations of the optimal solution are intractable in the general case, when the adversary must be able to compute the optimal mask for any prior and any realization of the variables in layer 0. Therefore, we now turn our focus to cases where the DBN exhibits special structure on the transition probabilities.
We start with DBNs with \emph{additive} transition structure, which we define next.
\begin{definition}\label{def:add}
We say a transition probability for $X_i$ is additive if
\[
\mathbb{P}(X_i = 1 | \Pa(X_i)) = \mathbb{P}(X_i = 1| Z_i)
\]
 where  $Z_i = \sum_{X_j^0\in\Pa(X_i^1)}X_j^0$
\end{definition}
We term the problem of finding an optimal adversarial mask when all transitions are additive \emph{ADBNM}, for \emph{Additive DBNM} in the untargeted case, and \emph{TADBNM} refers to the corresponding targeted problem.



\subsection{Inapproximability in the Additive Case}

First, we show that even this case is inapproximable, but now in the sense that no PTAS exists for this problem.
\begin{theorem}\label{prop:add_hard}
No PTAS exists for either ADBNM (untargeted) or TADBNM (targeted), when $p = 1$, unless P=NP, (even for monotone transition functions, when nodes have at most 2 parents).
\end{theorem}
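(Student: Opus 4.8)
The plan is to give a gap-preserving (L-)reduction from an APX-hard coverage problem---concretely \emph{Maximum $k$-Vertex-Cover} (given a graph, choose $k$ vertices maximizing the number of incident edges), which is a special case of Maximum Coverage and admits no PTAS unless P=NP---into ADBNM under the stated restrictions. The first step is to record how the $p=1$ LKM objective decomposes coordinate-wise. Since the two stage-1 vectors are scored under their product distribution and each coordinate is Bernoulli, a node $X_i^1$ with true (mask-free) marginal $q_i=\mathbb{P}(X_i^1=1\mid\mathbf{x}^0)$ and masked marginal $r_i=\mathbb{P}(X_i^1=1\mid\Pa(X_i^1)\setminus\eta)$ contributes $\mathbb{E}|X_i^1-X_{\eta,i}^1|=q_i+r_i-2q_ir_i$, so the untargeted objective equals $\sum_i\big(q_i+r_i(1-2q_i)\big)$---an affine function of the masked marginals that the mask controls.

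Next I would build the instance. Introduce one stage-0 node per vertex and one stage-1 node per edge, so every stage-1 node has exactly two parents (its endpoints), meeting the two-parent restriction. Assign every stage-0 node the common prior $p$, take the realization $\mathbf{x}^0$ to be all zeros, and equip each edge-node with the monotone additive transition $f(0)=0$, $f(1)=f(2)=1$ (nondecreasing in the parent-sum $Z_i$, so the monotonicity clause holds). Under the all-zero realization the true marginal is $q_i=f(0)=0$, hence each edge's contribution is just $r_i$, and a short computation gives $r_i=0$ when neither endpoint is masked, $r_i=p$ when exactly one is, and $r_i=2p-p^2$ when both are. Writing $\mathrm{cov}(S)$ for the number of edges incident to the masked set $S$ and $b(S)\le\mathrm{cov}(S)$ for the number with both endpoints in $S$, the ADBNM value of $S$ is exactly $p\cdot\mathrm{cov}(S)+p(1-p)\cdot b(S)$: the first term is a scaling of the Max-$k$-Vertex-Cover value, the second a nonnegative ``both-endpoints'' correction.

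The heart of the argument is showing this is gap-preserving, and this is where I expect the main difficulty, since the $b(S)$ term prevents the reduction from being exactly affine. I would fix $p$ to be a constant close to $1$ and use the bound $b(S)\le\mathrm{cov}(S)$ to sandwich the value as $p\cdot\mathrm{cov}(S)\le\mathrm{value}(S)\le p(2-p)\cdot\mathrm{cov}(S)$. With $C^*$ the Max-$k$-Vertex-Cover optimum, this yields $pC^*\le\mathrm{OPT}_{\mathrm{ADBNM}}\le p(2-p)C^*$, and a $(1-\delta)$-approximate mask $\hat S$ satisfies $p(2-p)\,\mathrm{cov}(\hat S)\ge(1-\delta)pC^*$, i.e. $\mathrm{cov}(\hat S)\ge\frac{1-\delta}{2-p}\,C^*$. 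Because $\tfrac{1}{2-p}\to1$ as $p\to1$, I can first pick the fixed constant $p$ so that $\tfrac{1}{2-p}$ exceeds the APX-hardness threshold $\rho^*<1$ of Max-$k$-Vertex-Cover, then invoke the hypothetical PTAS at a small enough $\delta$ so that $\frac{1-\delta}{2-p}>\rho^*$ still holds---contradicting NP-hardness and forcing P=NP. The careful bookkeeping of this multiplicative perturbation (optionally restricting to bounded-degree source instances, where $\mathrm{cov}$ is $\Theta(|E|)$ and the correction is genuinely lower-order) is the one step that must be done precisely so the constant-factor gap survives.

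Finally I would transfer everything to the targeted problem TADBNM, noting the two-parent and monotonicity properties of the gadget are unchanged. Choosing the target $\mathbf{X}^1_\alpha=\langle1,1,\dots,1\rangle$ on the edge-nodes turns each targeted coordinate cost into $1-r_i$, so minimizing $\sum_i(1-r_i)$ is exactly maximizing $\sum_i r_i$, the identical coverage objective; the same sandwich and gap analysis then apply verbatim, establishing the absence of a PTAS for both the untargeted and targeted additive problems.
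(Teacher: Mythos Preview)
Your argument is correct, but it takes a different route from the paper. The paper reduces from \emph{Dense $k$-Subgraph} rather than Max-$k$-Vertex-Cover, using the same vertex/edge bipartition you chose but with the threshold-at-$2$ transition $f(0)=f(1)=0,\ f(2)=1$ and prior $\mathbb{P}(X_v^0=1)=1-\epsilon$ with $\epsilon$ tiny. On the all-zero realization an edge-node contributes $(1-\epsilon)^2$ exactly when \emph{both} endpoints are masked and $0$ otherwise, so the ADBNM value is $(1-\epsilon)^2$ times the number of edges induced by the mask---i.e.\ a constant multiple of the DKSG objective. This makes the reduction exact (no sandwiching, no tuning of $p$), and the targeted version follows by taking $\mathbf{X}^1_\alpha=\langle 1,\dots,1\rangle$ just as you do.

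What each approach buys: the paper's choice yields a one-line correspondence between objectives and avoids the multiplicative slack you have to manage, but it leans on the claim that DKSG admits no PTAS ``unless P=NP,'' which in the literature is only established under assumptions stronger than $\mathrm{P}\neq\mathrm{NP}$ (Khot's sub-exponential randomized-time assumption, ETH-based results, etc.). Your reduction pays the price of the $b(S)$ correction term and the $\tfrac{1}{2-p}$ bookkeeping, but its source problem---Max-$k$-Vertex-Cover---is APX-hard under $\mathrm{P}\neq\mathrm{NP}$ outright, so your conclusion matches the theorem's hypothesis more directly. Both constructions respect the ``monotone, at most two parents'' clause.
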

\begin{proof}
To show that no PTAS exists for either problem, we will reduce from Dense $k$-Subgraph (DKSG).
An instance of DKSG is defined by a budget $k$ and a graph $G = (V, E)$. The objective is to find a vertex set $S\subset V$ such that $|\{(u, v)~\in~E~: u, v\in~S\}|$ is maximized while $|S| \leq k$.

To reduce an instance of DKSG to an instance of ADBNM perform the following actions. First, let $\bold{X}^0 = \{X_v^0 : v \in V\}$ and let $\bold{X}^1 = \{ X_{(u, v)}^1: (u, v) \in E\}$. For each $X_v^0\in \bold{X}^0$, let $\mathbb{P}(X_v^0 = 0) = \epsilon$ for arbitrarily small $\epsilon$. It is easy to check that for $\epsilon = \frac{1}{2^{2n}}$, similar reasoning to our previous hardness result holds. Lastly, set $\mathbb{P}(X_{(u, v)}^1| Z_{(u, v)}) = 1$ if $z_{(u, v)} = 2$ and $\mathbb{P}(X_{(u, v)}^1| Z_{(u, v)}) = 0$ otherwise. Suppose that $\bold{x}^0 = <0, 0, ..., 0>$. For TADBNM we need one extra condition that $\mathbf{X}_{\alpha} = \langle1, 1, .., 1 \rangle$.
Now, let $\eta \subset \bold{X}^0$ be any mask. Then, for each pair $X_v^0, X_u^0 \in \eta$, we have 
\[
\mathbb{E}\big[|X_{(u, v)}^1 - X_{\eta, (u, v)}^1|~\big| \bold{x}^0 \big]  = (1-\epsilon)^2
\]
Therefore, for a given $\eta$, the attacker's total utility is
\[
\sum_{X_u^0, X_v^0 \in \bold{X}^1: u\neq v} (1-\epsilon)^2 = \beta(1-\epsilon)^2
\]
where $\beta$ is the number of unique pairs contained in $\eta$.
Hence, the maximum utility an attacker can obtain is $\beta^*(1-\epsilon)^2$ where $\beta^*$ is the maximum number of distinct pairs $X_v^1, X_u^1$ that can be contained in any $\eta$ of size at most $k$. Since each such pair represents an edge in $E$ and $\eta$ represents a collection of vertices of $V$, the maximum dense $k$-subgraph has size $\beta^*$ and is given by the vertices in $\eta$. That is, if a given mask $\eta$ has utility $\beta(1-\epsilon)^2$, then the vertices in $\eta$ correspond to a subgraph of cardinality $\beta$. Similarly, if $S\subset V$ describes a subgraph of size $\beta$, then by mapping the vertices in $S$ to a mask $\eta$, the attacker can achieve utility $\beta(1-\epsilon)^2$.

Since the objectives of the two problems share arbitrary similarity, if a PTAS where to exists for ADBNM, then that same PTAS also exists for DKSG. However, unless P=NP no such algorithm exists for DKSG. Thus, no PTAS exists for ADBNM, unless P=NP.
\end{proof}

\begin{theorem}
For $p \in \mathbb{N}_{\geq 2} \cup \{\infty\}$ ADBNM (untargeted) or TADBNM (targeted), when $p = 1$, unless P=NP, (even for monotone transition functions, when nodes have at most 2 parents).
\end{theorem}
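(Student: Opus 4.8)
The plan is to reuse the reduction from Dense $k$-Subgraph (DKSG) constructed in the proof of Theorem~\ref{prop:add_hard}: the stage-$0$ variables are the vertices $V$, the stage-$1$ variables are the edges $E$ (each with exactly two parents, and with the monotone transition ``fire iff $Z=2$''), the realization is $\mathbf{x}^0=\langle 0,\dots,0\rangle$, and a mask $\eta$ selects a vertex set $S$ inducing $\beta$ edges. Since only the norm on the difference vector changes when $p\ge 2$ or $p=\infty$, the entire argument reduces to controlling one quantity. First I would record the elementary fact that for a $\{0,1\}$-valued difference vector one has $\|\mathbf{X}^1-\mathbf{X}^1_\eta\|_p = H^{1/p}$ for finite $p$ and $\|\mathbf{X}^1-\mathbf{X}^1_\eta\|_\infty=\mathbf{1}[H\ge 1]$, where $H$ is the random Hamming distance. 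Because the true posterior is the deterministic all-zero vector, $H$ is exactly the number of edges of $G[S]$ whose two masked endpoints both realize to $1$, so the objective is $D=\mathbb{E}[H^{1/p}]$ (respectively $\mathbb{P}(H\ge 1)$) --- a \emph{nonlinear} functional of $H$, rather than the linear $\mathbb{E}[H]$ that made the $p=1$ case immediate.

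The crux is to pick the stage-$0$ prior so that this nonlinear functional is, up to a $(1\pm o(1))$ factor, a fixed monotone function of the DKSG value $\beta$. I would set the probability that each masked vertex realizes to $1$ to a polynomially small $\epsilon$ (e.g.\ $\epsilon=n^{-5}$ with $n=|V|$). Each edge of $G[S]$ then fires with marginal probability $\epsilon^2$, and a leading-order inclusion--exclusion estimate, writing $A_e$ for the event that edge $e$ fires and using $\mathbb{P}(A_e\cap A_f)\le\epsilon^3$ for any two edges, gives
\begin{align*}
\mathbb{E}[H]-\mathbb{P}(H\ge 1)\le \textstyle\binom{\beta}{2}\epsilon^3,\qquad \mathbb{P}(H\ge 1)\ge \epsilon^2\beta-\textstyle\binom{\beta}{2}\epsilon^3 .
\end{align*}
Since $\mathbb{P}(H\ge 1)\le \mathbb{E}[H^{1/p}]\le \mathbb{E}[H]=\epsilon^2\beta$ sandwiches the objective for every finite $p$, while $\mathbb{E}[\|\cdot\|_\infty]=\mathbb{P}(H\ge 1)$ for $p=\infty$, all three quantities agree with $\epsilon^2\beta$ to within relative error $O(n^2\epsilon)=o(1)$. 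Thus, \emph{uniformly in} $p\in\mathbb{N}\cup\{\infty\}$, the objective equals $\epsilon^2\beta\,(1\pm o(1))$, so maximizing it is the same as maximizing $\beta$ up to a $(1\pm o(1))$ factor. (For finite $p$ one could instead keep the high-firing construction, where $H=\beta$ w.h.p.\ and $D=\beta^{1/p}(1-o(1))$; a $(1-\delta)$-approximation of $\beta^{1/p}$ is then a $(1-\delta)^p\ge 1-p\delta$ approximation of $\beta$, still a PTAS because $p$ is constant. This route breaks at $p=\infty$, which is exactly why I prefer the low-firing regime.)

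The transfer is then immediate: a PTAS for ADBNM returning a mask with value at least $(1-\delta)$ times optimal yields a vertex set with $\beta\ge(1-\delta)(1-o(1))\beta^*$, i.e.\ a PTAS for DKSG, impossible unless P$=$NP. The monotonicity of the transition and the bound of two parents per node are inherited directly from the construction. The targeted version TADBNM uses the same instance with $\mathbf{X}_\alpha=\langle 1,\dots,1\rangle$, so that minimizing the distance to the all-ones target is minimizing the number of non-fired edges, hence maximizing $\beta$. I expect the main obstacle to be precisely where the cases diverge: the $\infty$-norm collapses $H$ to a single bit, so in the natural high-firing regime every mask that contains an edge attains objective $\approx 1$ and all dependence on $\beta$ is destroyed; restoring a faithful dependence forces the low-firing regime together with the inclusion--exclusion control above. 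A secondary point I would verify is that the targeted (minimization) reduction stays gap-preserving, since there the $\beta$-signal rides on top of the constant $|E|$.
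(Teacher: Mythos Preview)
Your argument is correct and in fact sharper than the paper's. The paper keeps the high-firing construction of Theorem~\ref{prop:add_hard} unchanged and asserts, via a symmetry/monotonicity claim, that the attacker's utility ``increases with respect to the number of unique pairs corresponding to edges in the original graph, independent of which pair is added,'' concluding only that the exact optimum is NP-hard. That step is loose: the distribution of $H$ depends on the \emph{structure} of $G[S]$, not just on the edge count $\beta$, so distinct edges do not in general contribute identical increments to $\mathbb{E}[H^{1/p}]$. For finite $p$ the slack is harmless in the high-firing regime, since $H=\beta$ with probability $(1-\epsilon)^{k}$ and hence $\mathbb{E}[H^{1/p}]=\beta^{1/p}(1-o(1))$ still orders masks by $\beta$ (this is the parenthetical route you sketch). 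At $p=\infty$, however, $\mathbb{P}(H\ge1)$ is within $o(1)$ of $1$ for \emph{every} mask inducing at least one edge, and the paper's monotonicity claim no longer distinguishes masks. Your switch to the low-firing prior, together with the sandwich $\mathbb{P}(H\ge1)\le\mathbb{E}[H^{1/p}]\le\mathbb{E}[H]$ and the pairwise bound $\mathbb{P}(A_e\cap A_f)\le\epsilon^3$, is exactly what restores a uniform $(1\pm o(1))\,\epsilon^2\beta$ estimate across all $p$, and it buys you the stronger no-PTAS conclusion matching Theorem~\ref{prop:add_hard}.

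The one place to be careful is the targeted case, which you rightly flag. With $\mathbf{X}^1_\alpha=\langle1,\dots,1\rangle$ the minimization objective in the low-firing regime is a constant (roughly $|E|^{1/p}$) minus an $O(\epsilon^2\beta)$ correction, so \emph{any} mask is already a $(1+o(1))$-approximation and the reduction is not gap-preserving. The paper avoids this because it only asserts NP-hardness, for which it suffices that the exact minimizer coincides with the densest $k$-subgraph. If your aim is to match the paper you are done for TADBNM as well; if you want a genuine no-PTAS statement for the targeted problem you would need to modify the instance so that the optimal target distance is itself $o(1)$ (for example by planting a $k$-clique reachable by some mask), or else downgrade the targeted claim to NP-hardness only.
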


\begin{proof}
    We will use the same reduction from DKSG used in the proof of Theorem \ref{prop:add_hard}. Under construction, and for a general $p$, the attacker's utility for any $\eta$ is 
    \begin{align*}
        \sum_{i = 1}^n \mathbb{P}\big( \sum_{X_j^0 \in \mathbf{X}^0} X_j^0 = i \big)~i^{\frac{1}{p}}
    \end{align*}
    with the understanding that $i^{\frac{1}{\infty}} = 1$.
    Note that this objective function is monotone with respect to the number of unique pairs $X_{u}^0, X_{v}^0 \in \eta$ that correspond to edges $(u, v) \in E$. Further, since each node in $\mathbf{X}^0$ is identical each such pair contributes the same increase to the objective function. Therefore, the objective function increases with respect to the number of unique pairs corresponding to edges in the original graph, independent of which pair is added. Therefore the objective function of the attacker is maximized by finding the largest set of unique pairs $X^0_{u}, X^0_{v}$ which correspond to edges in the graph, this is the exact objective of the original DKSG problem, meaning that a valid solution to one problem is exactly a valid solution to the other and both ADBNM and TADBNM are NP hard for $p > 1$.
\end{proof}

\subsection{Approximation Algorithm}

While even the ADBNM special case is inapproximable in a sense, we now present our first positive result, which is an $n$-approximation (recall that the best known approximation of DKSG is $\Theta(n^{1/4})$, and we showed that our problem is no easier in the reduction above).


First, we impose an additional restriction on the problem: we assume that all transition functions have the propriety that $\mathbb{P}(X_i^1 = 1| Z_i)$ is monotone with respect to $Z_i$. 
We propose Algorithm~\ref{alg:appro2} for this problem.
Next, we show that this algorithm yields a provable approximation guarantee.


\begin{algorithm}
\caption{Approximation algorithm}
\begin{algorithmic}[1]\label{alg:appro2}
\State bestMask := $\emptyset$
\For{\textbf{each} $X_i^1 \in \bold{X}^1$}
    \State $\eta := \emptyset$
    \If{$\mathbb{P}(X_i^1|z_i)$ increasing $\And \mathbb{P}(X_i^1|z^*_i) < \frac{1}{2}$}
        \State $S = \{ X_j^0\in \Pa(X_i^1): x_j^0 = 0\}$
    \ElsIf{$\mathbb{P}(X_i^1|z_i)$ increasing $\And \mathbb{P}(X_i^1|z^*_i) \geq \frac{1}{2}$}
        \State $S = \{ X_j^0\in \Pa(X_i^1): x_j^0 = 1\}$
    \ElsIf{$\mathbb{P}(X_i^1|z_i)$ decreasing $\And \mathbb{P}(X_i^1|z^*_i) < \frac{1}{2}$}
        \State $S = \{ X_j^0\in \Pa(X_i^1): x_j^) = 1\}$
    \ElsIf{$\mathbb{P}(X_i^1|z_i)$ decreasing $\And \mathbb{P}(X_i^1|z^*_i) \geq \frac{1}{2}$}
        \State $S = \{ X_j^0\in \Pa(X_i^1): x_j^0 = 0\}$
    \EndIf
    \While{$|\eta| < k$ and $S\setminus\eta\neq\emptyset$}
        \If{$S$ has outcomes of 1}
            \State $x := \text{argmin}_{s\in S}\mathbb{P}(s = 1)$
        \ElsIf{$S$ has outcomes of 0}
            \State $x := \text{argmax}_{s\in S}\mathbb{P}(s = 1)$
        \EndIf
        
        \State add $x$ to $\eta$
    \EndWhile
    \If{$V(\eta) > V(\text{bestMask})$}
       \State bestMask $:= \eta$
    \EndIf 
\EndFor
\Return bestMask
\end{algorithmic}
\end{algorithm}

\begin{proposition}
For any $p \in \mathbb{N} \cup \{\infty\}$ Algorithm 1 achieves a $n-$approximation on both targeted and untargted attacks. 
\end{proposition}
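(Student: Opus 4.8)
The plan is to reduce the analysis of the $\ell_p$-norm objective to a per-coordinate analysis and then exploit the fact that Algorithm~\ref{alg:appro2} loops over every target node $X_i^1$ and returns the mask of best overall value. Write $f_i(z) = \mathbb{P}(X_i^1 = 1 \mid Z_i = z)$, let $z_i^* = \sum_{X_j^0 \in \Pa(X_i^1)} x_j^0$ be the realized parent sum, $q_i = f_i(z_i^*)$ the unmasked (true) posterior, and $r_i(\eta) = \mathbb{P}(X_{\eta,i}^1 = 1)$ the masked posterior. Since the coordinates are binary and independent, the per-coordinate LKM contribution is $d_i(\eta) = \mathbb{E}|X_i^1 - X_{\eta,i}^1| = q_i + r_i(\eta)\bigl(1 - 2q_i\bigr)$, and the whole objective is $V(\eta) = \mathbb{E}\bigl[N^{1/p}\bigr]$ with $N = \sum_i \mathbf{1}[X_i^1 \neq X_{\eta,i}^1]$ the number of differing coordinates (and the convention $N^{1/\infty} = \mathbf{1}[N \ge 1]$).

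The first step is a crude sandwich bound that absorbs the nonlinearity of the $\ell_p$ norm. For every integer $N \ge 0$ and every $p$ one has $\mathbf{1}[N \ge 1] \le N^{1/p} \le N$; fixing a coordinate $i$ and using $N \ge \mathbf{1}[X_i^1 \neq X_{\eta,i}^1]$, taking expectations yields
\[
\max_i d_i(\eta) \;\le\; V(\eta) \;\le\; \sum_i d_i(\eta) \;\le\; n \max_i d_i(\eta).
\]
Applying the upper bound to an optimal mask $\eta^*$ and letting $i^*$ attain $\max_i d_i(\eta^*)$ gives $V(\eta^*) \le n\, d_{i^*}(\eta^*)$, which is the pigeonhole that turns ``the best single coordinate of $\eta^*$'' into a global bound.

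The crux is to show that the iteration of Algorithm~\ref{alg:appro2} targeting $i^*$ produces a mask $\eta^{(i^*)}$ with $d_{i^*}(\eta^{(i^*)}) \ge d_{i^*}(\eta^*)$; the lower sandwich bound then gives $V(\eta^{(i^*)}) \ge d_{i^*}(\eta^{(i^*)}) \ge d_{i^*}(\eta^*) \ge V(\eta^*)/n$, and since \texttt{bestMask} maximizes $V$ over all iterations we conclude $V(\texttt{bestMask}) \ge V(\eta^*)/n$. To establish the single-node claim I would argue in three moves. First, $d_{i^*}$ depends on $\eta$ only through $r_{i^*}(\eta)$, which depends only on $\eta \cap \Pa(X_{i^*}^1)$, so masking non-parents is wasteful and restricting $S$ to parents loses nothing relative to $\eta^*$. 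Second, because $q_{i^*}$ is fixed, maximizing $d_{i^*} = q_{i^*} + r(1-2q_{i^*})$ means maximizing $r_{i^*}$ when $q_{i^*} < \tfrac12$ and minimizing it when $q_{i^*} \ge \tfrac12$; combined with the monotonicity of $f_{i^*}$ this is exactly the four-way case split that determines $S$. Third, masking a $0$-valued (resp.\ $1$-valued) parent replaces its fixed contribution to $Z_{i^*}$ by an independent $\mathrm{Bernoulli}(p_j)$, so the marginal change in $r_{i^*}$ from adding parent $j$ to a current set equals $p_j$ (resp.\ $1-p_j$) times the factor $\mathbb{E}[f_{i^*}(Z{+}1) - f_{i^*}(Z)]$, which is common to all remaining candidates at that step; an exchange argument (swapping a larger-$p_j$ omitted parent for a smaller-$p_j$ included one never decreases $r_{i^*}$) shows the greedy order is exactly optimal subject to $|\eta| \le k$. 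Hence $\eta^{(i^*)}$ attains $\max_\eta d_{i^*}(\eta)$ over all feasible masks, so in particular $d_{i^*}(\eta^{(i^*)}) \ge d_{i^*}(\eta^*)$, completing the chain.

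I expect the single-node optimality (the exchange argument) to be the main obstacle, since it is where the additive structure and the monotonicity assumption are genuinely used; the nonlinear $\ell_p$ aggregation is handled cheaply by the sandwich $\max_i d_i \le V \le n\max_i d_i$, which is precisely what makes $n$ — rather than a smaller factor — the natural guarantee. For the targeted problem I would run the identical scheme after replacing the per-coordinate quantity by the reduction in distance to the target, namely $g_i(\emptyset) - g_i(\eta) = \bigl(q_i - r_i(\eta)\bigr)\bigl(1 - 2\alpha_i\bigr)$ where $g_i(\eta) = \alpha_i + r_i(\eta)(1-2\alpha_i)$: the comparison threshold $\tfrac12$ becomes $\alpha_i$, the masking direction flips with the sign of $1 - 2\alpha_i$, and the same exchange argument together with the same sandwich bound delivers the $n$-approximation. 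The only extra care needed is to phrase the targeted objective as the (maximized) distance reduction, so that the coordinate-wise pigeonhole applies exactly as in the untargeted case.
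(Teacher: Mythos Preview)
Your proposal is correct and follows the same overall strategy as the paper: reduce the $\ell_p$ objective to a per-coordinate quantity, show that the loop of Algorithm~\ref{alg:appro2} over $X_i^1$ optimizes that per-coordinate quantity for each $i$ (via monotonicity plus an exchange argument), and pick up the factor $n$ from the number of coordinates. The difference is in the intermediate quantity: the paper parameterizes by $Q_a = \max_i \max_\eta \mathbb{P}(X_i^1=1\mid z_i^\eta)$, i.e.\ the furthest any masked posterior can be pushed toward the desired extreme, and then upper-bounds the optimal $\ell_p$ value by the binomial expectation $\sum_i i^{1/p}\binom{n}{i}Q_a^i(1-Q_a)^{n-i}\le nQ_a$; you instead work directly with the per-coordinate LKM term $d_i(\eta)=q_i+r_i(\eta)(1-2q_i)$ and use the sandwich $\max_i d_i \le V \le n\max_i d_i$. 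Your route is a bit more direct, since $d_i$ is literally the summand of the $\ell_1$ objective and the sandwich avoids the paper's implicit identification of ``probability of disagreement'' with ``masked posterior at most $Q_a$''. The paper's route, on the other hand, makes slightly more explicit how the bound degrades with $p$ (the $i^{1/p}$ weights), though both arrive at the same $1/n$ ratio. Your handling of the targeted case---recasting it as maximizing the distance reduction and swapping the $\tfrac12$ threshold for $\alpha_i$---is also what the paper has in mind, though the paper does not spell this out.
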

\begin{proof}
The algorithm generates one mask for each node $X_i^1 \in \mathbf{X}^1$. The associated mask, $\eta_i$, is meant to push the observer's perception of $\mathbb{P}(X_i^1|z_i)$ as close to some extreme (0 or 1) as possible.
We will examine the contribution that the $X_i^1$, most pushed to the desired extreme, makes to the attacker's total utility. Suppose $\mathbb{P}(X_i^1 = 1|z_i^{\eta_i})$ is being pushed to 1. A symmetric argument will hold in the case of 0.
Let 
$    X_a^1 = \arg\max_{X_i}\bigg(\max_{\eta_i} \mathbb{P}\big(X_i = 1|z^{\eta_i}_i \big) \bigg)$   
and let $Q_a = \mathbb{P}(X_i^1 = 1|z_i^{\eta_i})$. 
Next we will show that $Q_a$ is at least $\frac{1}{n}$ of the optimal solution no matter what $L_p$ norm is used. The attacker's utility is given by $\mathbb{E}[||\mathbf{X}_{\eta_i}^1 - \mathbf{X}^1||_p]$, where $\mathbf{X}_{\eta_i} - \mathbf{X}^1$ is a binary vector. For finite $p$ we have, 
\begin{equation*}
    ||\mathbf{X}_{\eta_i} - \mathbf{X}^1||_p = \bigg(\sum_{i = 1}^n |x_{\eta i} - x_i|\bigg)^{\frac{1}{p}}\leq n^{\frac{1}{p}}
\end{equation*}
and in the case when $p = \infty$ we have
\begin{equation*}
    ||\mathbf{X}_{\eta_i} - \mathbf{X}^1||_p = \max_i |x_{\eta i} - x_i|\le 1
\end{equation*}
Under any $p$ the attackers utility on $\eta_a$ is at least $Q_a||1||_p = Q_a$. To get the actual bound on approximation we will split on 3 cases. The first being when $p = 1$, the second being when $2 < p < \infty$ and the third being when $p = \infty$. In each case, each node has probability at most $Q_a$ to attian the desired outcome (0 or 1).
In the first case, when $p = 1$, the attacker's optimal utility is upper-bounded by 
\begin{align*}
    \sum_{i = 1}^n i \binom{n}{i} Q_a^i (1 - Q_a)^{n - i} = n Q_a 
\end{align*}
Hence the ratio to the optimal solution given by $\eta_a$ is 
$    \frac{Q_a}{Q_a n} = \frac{1}{n}.$
In the second case, when $2 < p < \infty$, we have that the attackers optimal utility is upper-bounded by 
\begin{align*}
    &\sum_{i = 1}^n i^{\frac{1}{p}} \binom{n}{i} Q_a^i (1 - Q_a)^{n - i} \\
    \le &\sum_{i = 1}^n i \binom{n}{i} Q_a^i (1 - Q_a)^{n - i} = n Q_a 
\end{align*}
and again we get that the ratio to the optimal solution is $\frac{1}{n}$.

Lastly, when $p = \infty$ the attackers utility is exactly the probability that there exists at least one node with the desired outcome. Since each node has at most probability $Q_a$ to yield the desired outcome, the attacker's optimal utility is at most $ 1 - (1 - Q_a)^n$ and the attacker's utility on $\eta_a$ is at least $Q_a$. Thus the ratio to the optimal solution is at least
$    \frac{Q_a}{1 - (1-Q_a)^n}.$
By montonicity and evaluation of the limit as $Q_a \rightarrow 0$ we see that 
$    \frac{1}{n} \le \frac{Q_a}{1 - (1 - Q_a)^n}.$
Therefore, for any $p\in \mathbb{N} \cup \{\infty\}$ we get an approximation ratio of at least $\frac{1}{n}$

\end{proof}
\subsection{Heuristic}
In addition to our approximation algorithm above, we propose a simple heuristic approach for approximating the optimal mask.
The heuristic is a hill-climbing strategy in which, at each iteration, we add the node to $\eta$ that results in the maximum increase of the value of $\eta$; see Algorithm~\ref{alg:heur}.
As we demonstrate in the experiments below, the combination of the algorithm and the heuristic performs much better than either in isolation (and, of course, jointly achieves the $n$-approximation above).


\begin{algorithm} 
\caption{Heuristic algorithm}
\begin{algorithmic}[1]\label{alg:heur}
\State bestMask := $\emptyset$
\State $\eta$ := $\emptyset$
\While{$|\eta| < k$}
    \State $x$ := node with largest increase to $V(\eta)$
    \State $\eta$ = $\eta \cup \{x\}$
    \If{$V(\eta) > V(\text{bestMask})$}
        \State bestMask = $\eta$
    \EndIf
\EndWhile
\Return bestMask
\end{algorithmic}
\end{algorithm}


We now show that by itself, heuristic can be arbitrarily bad.
Fix $n > 3$ such that $2|n$, let $k = \frac{n}{2}$, and let $p_i = 1 - \epsilon$ for a sufficiently small $\epsilon$. Suppose $\bold{x}^0 = <0, 0, ..., 0>$. Let $\Pa(X_1^1)$ = $\{X_1^0 , X_1^0, ..., X_{n/2}^0\}$, and for each $X_i^1$ with $i > 1$, let 
\\ $\Pa(X_i^1) = \{X_{n/2 + 1}^0, ..., X_n^0 \}$. 
Define $\mathbb{P}(X_1^1 = 1|z_1) = \epsilon z_1$ and for all $i>1$ $\mathbb{P}(X_i^1 = 1|z_i) = 0 $ if $z_i < \frac{n}{2}$, and $\mathbb{P}(X_i^1 = 1|z_i) = 1$ if $z_i = \frac{n}{2}$.
Then we can see that the optimal mask, in both the hiding and flipping case is to hide all nodes $X_{n/2 + 1}^1, ..., X_n^1$. 
Which, results in a value of at least
$\frac{n}{2}(1-\epsilon)^{n/2}$ in the hiding case, and $\frac{n}{2}$ in the flipping case. 
However, since the only way to greedily increase the value of $\eta$ is to keep hiding nodes from $\{X_1^0, ..., X_{n/2}^0\}$, the mask produced by the heuristic will have value $(1-\epsilon)^{n/2}\epsilon\frac{n}{2}$.
Thus,
we get a ratio of 
\[\frac{(1-\epsilon)^{n/2}\epsilon\frac{n}{2}}{\frac{n}{2}(1-\epsilon)^{n/2}}
= \epsilon\]
Note that $\epsilon$ is independent of $n$.
Thus, as $\epsilon \rightarrow 0$ the value of the heuristic solution also converges to $0$ $\forall \ n>3$.

Next we will define and discuss linear Bayesian networks, on such networks this proposed heuristic is guaranteed to find the optimal solution, although doing so can be achieved by a much simpler algorithm which we will also discuss.  

\section{Polynomial-time Algorithm for Linear Bayesian Networks}\label{lin}

Our final contribution is a further restriction on the DBN that yields a polynomial-time algorithm for computing an optimal mask for the adversary.
Specifically, we consider networks in which each transition function is of the form 
$$\mathbb{P}\big(X_i^1 = 1| \Pa(X_i^1) \big) = \sum_{X_j^0\in \Pa(X_i^1)} a_{ij}X_j^0.$$
We call these \emph{linear Bayesian networks}.
\begin{theorem}
In linear Bayesian networks 
the optimal solution to DBNM and TDBNM can be computed in polynomial time for the $l_1$-norm.
\end{theorem}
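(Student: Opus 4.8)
The plan is to show that, for the $l_1$-norm, both the untargeted and targeted objectives collapse into an \emph{additive} (indeed linear) function of the masking decision, after which the optimum is obtained by a trivial selection rule. First I would use the fact that the $l_1$ distance decomposes coordinatewise, so that $D(\bold{X}^1,\bold{X}^1_\eta) = \sum_i \mathbb{E}\big[|X_i^1 - X_{\eta,i}^1|\big]$, and that each term is the expected absolute difference of two independent Bernoulli variables. Writing $q_i = \mathbb{P}(X_i^1 = 1 \mid \bold{x}^0)$ for the true posterior and $q_i^\eta$ for the masked posterior, a one-line computation gives
$$\mathbb{E}\big[|X_i^1 - X_{\eta,i}^1|\big] = q_i(1-q_i^\eta) + (1-q_i)q_i^\eta = q_i + (1-2q_i)\,q_i^\eta,$$
which is \emph{linear} in the masked marginal $q_i^\eta$ precisely because $q_i$ is a fixed constant once $\bold{x}^0$ is realized. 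This linearity is special to $p=1$ and is exactly the feature that breaks for higher norms.

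Second, I would exploit linearity of the transition function to obtain a closed form for $q_i^\eta$. Since the stage-0 variables are mutually independent, an oblivious Bayesian observer who cannot see the masked coordinates marginalizes each masked parent against its prior; by linearity of expectation this simply replaces the realized value $x_j^0$ by the prior mean $p_j$ for each masked $j$, giving
$$q_i^\eta = \sum_{X_j^0 \in \Pa(X_i^1)\setminus\eta} a_{ij}x_j^0 + \sum_{X_j^0 \in \Pa(X_i^1)\cap\eta} a_{ij}p_j = q_i - \sum_{X_j^0 \in \Pa(X_i^1)\cap\eta} a_{ij}(x_j^0 - p_j).$$
Substituting into the coordinatewise objective and summing over $i$ shows that the total distance equals a constant (independent of $\eta$) plus a sum over the masked variables, where masking variable $j$ contributes exactly
$$c_j = -\sum_{i:\,X_j^0\in\Pa(X_i^1)} (1-2q_i)\,a_{ij}(x_j^0 - p_j),$$
a quantity that does \emph{not} depend on which other variables are masked.

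The crucial point, and the step I expect to require the most care, is verifying that no cross-terms between distinct masked variables survive, so that the objective is genuinely additive. This follows because $q_i$ is a fixed constant and $\mathbb{E}[|X_i^1 - X_{\eta,i}^1|]$ is linear (not merely monotone) in $q_i^\eta$, so the per-node contributions of the masked parents superpose without interaction. Given additivity, the untargeted problem reduces to maximizing $\sum_{j\in\eta} c_j$ subject to $|\eta|\le k$, solved optimally by computing every $c_j$ (each a sum of at most $n$ terms) in polynomial time and masking the at most $k$ variables with the largest strictly positive coefficients.

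For the targeted problem TADBNM the identical argument applies with $q_i$ replaced by the fixed target marginal $\alpha_i = \mathbb{P}(X_{\alpha,i}^1 = 1)$: one obtains $\mathbb{E}[|X_{\alpha,i}^1 - X_{\eta,i}^1|] = \alpha_i + (1-2\alpha_i)\,q_i^\eta$, hence analogous per-variable coefficients that are again independent of the rest of the mask, and since TDBNM \emph{minimizes} the distance we instead select the $k$ variables with the most negative coefficients. Throughout I would make explicit the standing assumption that the parameters $a_{ij}$ keep every masked and unmasked posterior in $[0,1]$, which is what justifies the Bernoulli interpretation used in the first step; stating this explicitly rather than leaving it implicit is the only subtlety beyond the additive decomposition itself.
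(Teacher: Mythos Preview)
Your proposal is correct and follows essentially the same approach as the paper: decompose the $l_1$ objective coordinatewise, observe that each term is linear in the masked posterior $q_i^\eta$, use linearity of the transition to show that $q_i^\eta$ is obtained from $q_i$ by replacing $x_j^0$ with $p_j$ on masked parents, and conclude that the total objective is additive in the mask with per-variable coefficients independent of the rest of $\eta$, so a greedy top-$k$ selection is optimal. The only cosmetic difference is that the paper verifies additivity by computing the \emph{marginal} change from adding a single node $X_r^0$ to an arbitrary $\eta$ and checking it does not depend on $\eta$, whereas you write the closed form directly; your derivation is cleaner but the content is the same.
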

\begin{proof}
Consider the untargeted case first. Let $\bold{x}^0$ be the outcome given by nature. Let $\bold{y}^0$ be any outcome of $\bold{X}^0$ which agrees with $\bold{x}^0$ on all elements except those in $\eta$. More specifically, if $X_j^0\notin\eta$ then $x_j^0 = y_j^0$ and if $X_j^0 \in \eta$ then $y_j^0$ is free to be either 0 or 1.

For notational convenience we define the following variables for any mask $\eta$, and for any $X_i^1\in \bold{X}^1$ let 
\begin{align*}
P_{i,r} = \Pa(X_i^1)\cap\eta_r&~\text{ and }~P_i = \Pa(X_i^1)\cap\eta \\
Q_i  = \sum_{X_j^0\in \Pa(X_i^1)}a_{ij}x_j^0&~\text{ and }~R_i  = \sum_{X_j^0 \in \Pa(X_i^1)\setminus\eta}a_{ij}x_j^0
\end{align*}
then attacker's utility on $X_1^i$ can be given as
\[Q_i  + R_i+\sum_{X_j^0\in P_i} a_{ij}p_j - 2Q_i\big(R_i + \sum_{X_j^0\in P_i}a_{ij}p_j\big)\]
Consider the change in value of $\eta$ when adding some $X_r^0\in \Pa(X_i^1)\setminus\eta$ denote this new mask as $\eta_r = \eta\cup\{X_r^0\}$. Assume that $x_r^0 = 1$, a symmetric argument will yield a similar result when $x_r^0 = 0$. For notational convenience, let $R'_i = R_i - 1$. Then, the difference in value of $\eta$ and $\eta_r$ is
\begin{align*}
& Q_i  + R'_i + \sum_{X_j^0\in P_{i,r}} a_{ij}p_j - 2Q_i (R'_i + \sum_{X_j^0\in P_{i,r}}a_{ij}p_j) \\
& \quad - Q_i - R_i - \sum_{X_j^0\in P_i} a_{ij}p_j + 2Q_i \big(R_i - \sum_{X_j^0\in P_i}a_{ij}p_j\big) \\
 &=  - a_{ir}p_r(1 - 2Q_i) \\
\end{align*}
Thus for any $X_i^1 \in \bold{X}^1$ if we hide $X_r^0$ when $x_r^0 = 1$, then the change in utility to $X_i^1$'s contribution to the total utility is $-p_ra_{ir}(1-2Q_i)$, and similarly when $x_r^0 = 0$, the change is $p_ra_{ir}(1-2Q_i)$. Thus in both cases we get that hiding $X_r^0$ causes the attacker's utility to increase by $(-1)^{\beta_r}p_ra_{ir}(1-2Q_i)$ where $\beta_r = x_r^0$. In the targeted case the only way in which our analysis changes is in the value of $\beta_r$. Since we now have a desired target for each $X_i^1$, if that desired target is 0 then $\beta_r$ is also 0 and similarly when the target is 1, so is $beta_r$. Thus in both the targeted and untargeted case the change in utility is independent of the current mask $\eta$ and that the total utility is simply the sum of the utility on each $X_i^1$. Thus, when hiding any $X_r^0$ the change in the attacker's total utility increases linearly by a value that depends only on $x_r^0$ and not on the current mask $\eta$. Therefore the attackers utility can be written as 
\[
 \sum_{i = 1}^n Q_i + \sum_{r \in \mathbb{I}(\Pa(X_i^1)}y_r(-1)^{x_r^0}p_r a_{ir}(1-2Q_i)
 \]
where $\mathbb{I}(\Pa(X_i^1)$ is the index set of the parents of $X_i^1$, and if $X_r^0 \in \eta$ then $y_r = 1$ and if $X^0_r \notin \eta$ then $y_r = 0$. Assigning values to each $y_r$ such that $\sum_{r = 1}^n y_r \leq k$ can be done in polynomial time by simply selecting the $y_r$'s with the highest associated coefficients.
\end{proof}

\section{Experiments}\label{sec:exp}


As discussed in Section \ref{sec:add}, our approximation scheme is to compute both the $n$-approximation mask and the heuristic mask, then take the one yielding the higher utility. 
Note that this combination clearly yields an $n$-approximation.
As we now demonstrate, it is also significantly better in combination than either of the approaches by itself.



%

\begin{figure}[h]
    \centering
    \includegraphics[scale=0.2]{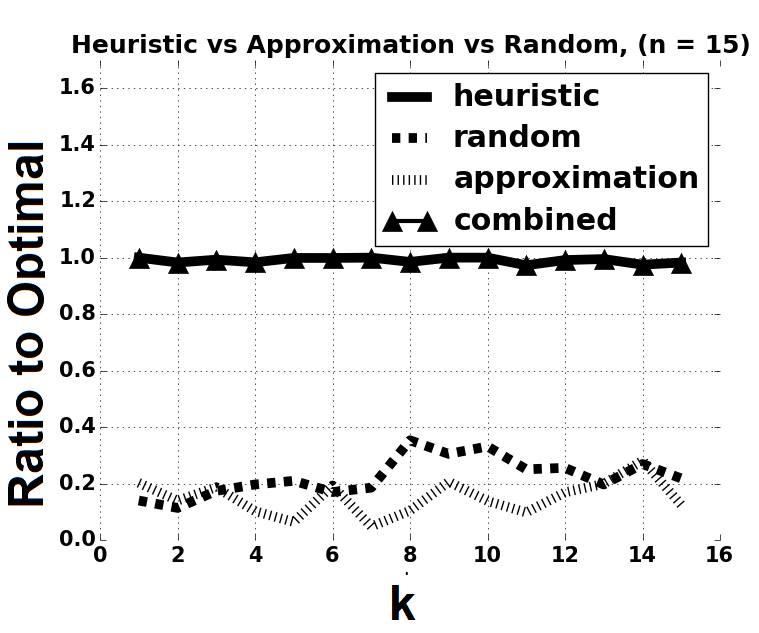}
    \includegraphics[scale=0.2]{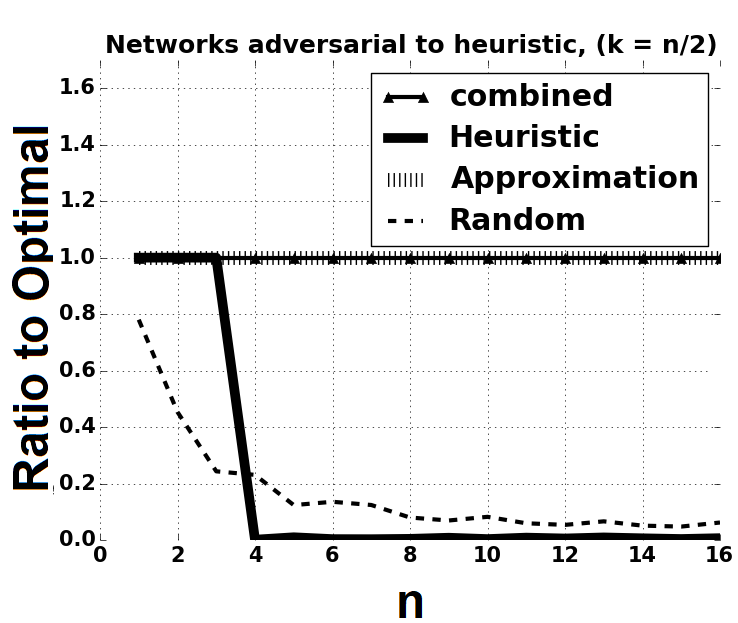}
    \caption{Comparison between our combined algorithm, heuristic and approximation algorithms in isolation, and random masking on randomly generated networks (left) and networks generated adversarially (right).}
    \label{fig:net_compt}
\end{figure}

Figure \ref{fig:net_compt} (left) shows the results on random general and additive networks, and demonstrates that our combined algorithm significantly outperforms the approximation algorithm, largely on the strength of the heuristic, which is highly effective in these settings.
Figure \ref{fig:net_compt} (right) studies settings constructed to be adversarial to the heuristic.
As we can see, here the combined algorithm performs similarly to the approximation algorithm, while the heuristic in isolation ultimately performs 
poorly.
Thus, the combination of the two is far stronger than each component in isolation.

\section{Conclusion}
We introduce a model of deception in which a principal needs to make a decision based on the state of the world, and an adversary can mask information about the state. We study this in a model where the principal is oblivious to the presence of the adversary and reasons about state change using a dynamic Bayes network. Even in a simple two time period model, we showt the existence of cases where an adversary with the ability to mask information about the state at time 0 can cause the oblivious principal to  have an arbitrarily incorrect posterior.
However, computing, or even approximating these masks to within a polynomial factor, is NP-hard in the general case.
We also consider this problem with special structure on the transition probabilities, showing that when transitions only depend on the sum of parent values, the problem remains inapproximable, although we now exhibit an $n$-approximation.
On the other hand, when transitions are linear, we show that it can be solved in polynomial time.

\small
\bibliographystyle{aaai}
\bibliography{ht}



\end{document}